\newtheorem{lemma}{Lemma}
\newtheorem{theorem}{Theorem}
\definecolor{mydarkblue}{rgb}{0,0.08,0.45}
\title{Efficient Learning of Ensembles with QuadBoost}
\author{
Louis Fortier-Dubois, Fran\c{c}ois Laviolette, Mario Marchand,\\ {\bf Louis-Emile Robitaille, and Jean-Francis Roy} \\
D\'{e}partement d'informatique et de g\'{e}nie logiciel\\
Universit\'{e} Laval\\
Qu\'{e}bec, Canada, G1V 0A6 \\
\texttt{firstname.lastname@ift.ulaval.ca} \\
}
\newcommand{\ahpair}{{\alb,\hb}}
\begin{document}

\maketitle

\begin{abstract}   
We first present a general risk bound for ensembles that depends on the $L_p$ norm of the weighted combination of voters which can be selected from a continuous set. We then propose a boosting method, called QuadBoost, which is strongly supported by the general risk bound and has very simple rules for assigning the voters' weights. Moreover, QuadBoost exhibits a rate of decrease of its empirical error which is slightly faster than the one achieved by AdaBoost. The experimental results confirm the expectation of the theory that QuadBoost is a very efficient method for learning ensembles. 
\end{abstract}

\section{Introduction}

As data is becoming very abundant, machine learning is now confronted with the challenge of having to learn complex models from huge data sets. Among the learning algorithms which seem most likely to be able to scale up to meet this challenge are ensemble methods based on the idea of boosting weak learners~\citep{s-90}. Take AdaBoost~\citep{fs-97} for example. If a weak learner is (almost always) able to produce in linear time a classifier achieving an empirical error just slightly better than random guessing, then the exponential rate of decrease of the training error of AdaBoost will give us a good majority vote in linear time.  

After AdaBoost was published, it soon became clear that infinitely many surrogate loss functions~\citep{mbbf-00} and regularizers could be used for boosting and, without surprise, many variants have been proposed---to the point where the practitioner is often completely overwhelmed when confronted with the choice of picking a boosting algorithm for his learning task. Are some algorithms better than others? If so, then under what circumstances are they better? If not, then are they, somehow, all equivalent? In an attempt to answer these questions we have decided to search for a risk bound guarantee that applies to all ensemble methods, no matter what are the surrogate loss and regularizer used by the algorithm. What comes out from the risk bound presented in the next section is the distinct difference between a $L_1$ norm regularizer and all the other $L_p$ norm regularizers with $p > 1$. This difference appears to be fundamental in the sense that the Rademacher complexity of a unit $L_{p>1}$ norm combination of functions depends explicitly on the number of functions used in the ensemble while no such dependence occurs for the $L_1$ norm case. Consequently, an explicit control of the number of voters in the ensemble should be exercised while boosting with a $L_{p>1}$ regularizer, but no such control is needed while regularizing with the $L_1$ norm.   

Concerning the issue of the surrogate loss to be used for boosting, we propose the simple quadratic loss (hence the name QuadBoost). Although the theory suggests using the hinge loss, this leads to linear programming algorithms which could become computationally prohibitive with large number of voters and huge data sets. The quadratic loss, on the other hand, leads to very simple rules for setting the weights on the voters, does not need to assign weights on the training examples, and exhibits a rate of decrease of the training error which is slightly faster than the one achieved by AdaBoost. The use of the quadratic loss for boosting was already proposed by~\citet{by-03} and its effectiveness has been analysed through the bias-variance decomposition. Here, instead, we analyse it through a risk loss bound and propose several regularized variants that were not considered by~\citet{by-03}. The experimental results confirm the expectation of the theory that QuadBoost is a very efficient method for learning ensembles and, consequently, is likely to be effective for learning complex models from data.  

\section{A General Risk Bound for Ensembles}
\label{sect:RBE}

We consider the difficult task of finding classifiers having small expected zero-one loss. In the supervised learning setting, the learner has access to a training set $S\eqdef\{(x_1, y_1),\ldots,(x_m,y_m)\}$ of $m$ examples where each example $(x_i,y_i)$ is drawn independently from a fixed, but unknown, distribution $D$ on $\Xcal\times\Ycal$. For the binary classification case, the input space $\Xcal$ is arbitrary, whereas the output space $\Ycal = \{-1, +1\}$. Given access to $S$, the task of the learner is to find, in reasonable time, a classifier $f: \Xcal\ra\Ycal$ having a small expected zero-one loss $\Eb_{(x,y)\sim D} I(f(x)\ne y)$, where $I(a) = 1$ if predicate $a$ is true, and $0$ otherwise.

We are not only concerned here with the problem of finding classifiers with good generalization (\ie, small expected zero-one loss), but also with the running time complexity of finding such classifiers. In that respect, ensemble methods, such as AdaBoost~\citep{fs-97}, appear to us as mostly promising. Let us then investigate these methods with respect to both objectives. 

As is often the case with ensemble methods, we assume that we have access to a (possibly continuous) set $\Hcal$ of real-valued functions that we call the set of possible \emph{voters}. Our task is to select from $\Hcal$ a finite subset of $n$ voters
on which a weighted majority vote classifier is produced.
Let $\hb\eqdef (h_1, \ldots, h_n)$ denote the vector formed by concatenating these $n$ voters and let $\alb\eqdef (\al_1,\ldots,\al_n)$ denote the vector of $n$ real-valued weights used to weight the voters. For any input $x\in\Xcal$, the output $f_\ahpair(x)$ on $x$ of the weighted majority vote is given by
$$
f_\ahpair(x)\ =\ \sgn\LP \alb\cdot\hb(x)\RP\ \eqdef\ \sgn\LP\sum_{i=1}^n \al_i h_i(x)\RP\, ,
$$
where $\sgn(z)=+1$ if $z>0$, and $-1$ otherwise. 


To find out what the majority vote $f_\ahpair$ should optimize on the training data $S$ to have good generalization, we have investigated guarantees known as uniform risk bounds. In particular, those which are based on the Rademacher complexity are particularly appealing and tight. In a nutshell, the Rademacher complexity of a class of functions measures its capacity to fit random noise. More precisely, given a set $S$ of $m$ examples, the \emph{empirical Rademacher complexity} $\Rcal_S(\Fcal)$ of a class $\Fcal$ of real-valued functions and its expectation $\Rcal_m(\Fcal)$ are defined as
$$
\Rcal_S(\Fcal)\ \eqdef\ \esp{\sgb} \sup_{f\in\Fcal} \frac{1}{m} \sum_{i=1}^m
\sg_i f(x_i) \quad;\quad \Rcal_m(\Fcal)\ \eqdef\ \esp{S\sim D^m}\Rcal_S(\Fcal)\, ,
$$
where $\sgb\eqdef(\sg_1,\ldots,\sg_m)$ and where each $\sg_i$ is a $\pm 1$-valued random variable drawn independently according to the uniform distribution.

Given a weighted majority vote $f_\ahpair$ of functions taken from a function class $\Hcal$, let $\|\alb\|_p$ denote the $L_p$ norm of vector $\alb$ for any $p\ge 1$ and let $\dim(\alb)$ denote the dimension (\ie, the number of components) of vector $\alb$.  
An important issue concerning majority votes is the complexity of the set of functions induced by taking weighted combinations of functions at fixed norm. Hence, given a class $\Hcal$ of real-valued functions, let us consider 
$$
\Ccal_p^n(\Hcal)\ \eqdef\ \LC x\mapsto \alb\cdot\hb(x) \mid h_i\in\Hcal\ \forall i, \dim(\alb) = n, \|\alb\|_p = 1\RC\, .
$$
We already know that $\Rcal_S(\Ccal_1^n(\Hcal)) = \Rcal_S(\Hcal)$ for any $n$.
But what happens if the weighted combination is at unit $L_p$ norm for $p>1$? The next lemma, which is apparently new, tells us that taking a weighted combination of functions strictly increases the Rademacher complexity for $p>1$. 

We will assume, for the rest of the paper, that the class $\Hcal$ of voters is \emph{symmetric}. This means that if voter $h$ is in $\Hcal$, then voter $-h$ is also in $\Hcal$.  

\begin{lemma}\label{lem:holder}
For any symmetric class $\Hcal$ of real-valued functions, any $n\in\Naturals$, and any $p\in[1,+\infty)$, we have
$$
\Rcal_S(\Ccal_p^n(\Hcal))\ =\ n^{1-\frac{1}{p}}\Rcal_S(\Hcal)\, .
$$
\end{lemma}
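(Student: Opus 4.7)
The plan is a three-step calculation: reorganise the double sum, dualise with H\"{o}lder, then exploit that the remaining optimisation separates across coordinates. Abbreviating
$$
T(h,\sgb)\ \eqdef\ \frac{1}{m}\sum_{i=1}^m \sg_i\, h(x_i)\, ,
$$
swapping the order of summation gives
$$
\Rcal_S(\Ccal_p^n(\Hcal))\ =\ \esp{\sgb}\sup_{h_1,\ldots,h_n\in\Hcal}\ \sup_{\|\alb\|_p=1}\, \sum_{j=1}^n \al_j\, T(h_j,\sgb)\, .
$$

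For any fixed choice of $h_j$'s, the inner supremum is a textbook instance of $L_p$/$L_q$ duality: for any $(t_1,\ldots,t_n)\in\mathbb{R}^n$, $\sup_{\|\alb\|_p=1}\sum_j \al_j t_j = \LP\sum_j |t_j|^q\RP^{1/q}$, where $q$ is the H\"{o}lder conjugate exponent ($1/p+1/q=1$, with $q=\infty$ when $p=1$). Applying this with $t_j = T(h_j,\sgb)$ yields
$$
\Rcal_S(\Ccal_p^n(\Hcal))\ =\ \esp{\sgb}\sup_{h_1,\ldots,h_n\in\Hcal}\, \LP \sum_{j=1}^n |T(h_j,\sgb)|^q \RP^{1/q}
$$
(with the obvious modification when $q=\infty$).

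The critical observation is that each summand $|T(h_j,\sgb)|^q$ depends on $h_j$ alone, so the supremum over $\hb\in\Hcal^n$ decouples into per-coordinate suprema, and every coordinate gives the same value:
$$
\sup_{h_1,\ldots,h_n\in\Hcal}\sum_{j=1}^n |T(h_j,\sgb)|^q\ =\ n\cdot\sup_{h\in\Hcal}|T(h,\sgb)|^q\, .
$$
Taking $q$-th roots pulls out the factor $n^{1/q}=n^{1-1/p}$. The final step invokes symmetry of $\Hcal$: replacing $h$ by $-h$ if necessary, $\sup_{h\in\Hcal}|T(h,\sgb)|=\sup_{h\in\Hcal} T(h,\sgb)$, and taking expectation over $\sgb$ leaves exactly $\Rcal_S(\Hcal)$.

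I do not foresee a genuine obstacle; the only mildly delicate point is the decoupling of the sup across coordinates, which is immediate because the objective separates additively and the constraints $h_j\in\Hcal$ are independent across $j$. As a sanity check, the $p=1$ case recovers the stated identity $\Rcal_S(\Ccal_1^n(\Hcal))=\Rcal_S(\Hcal)$ in the $q=\infty$ limit, where $n^{1/q}$ collapses to $1$.
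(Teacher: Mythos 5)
Your proof is correct and follows essentially the same route as the paper's: rewrite the Rademacher average by swapping sums, apply the fact that H\"{o}lder's inequality is tight to evaluate the inner supremum over $\|\alb\|_p=1$ as an $\ell_q$ norm, decouple the supremum over $(h_1,\ldots,h_n)$ coordinatewise, and use symmetry of $\Hcal$ to remove the absolute value. Your explicit handling of the $q=\infty$ case is a minor (and welcome) addition, but the argument is otherwise identical.
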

\begin{proof}
Let $(1/q)\eqdef 1 - (1/p)$. We will make use of the known fact that H\"{o}lder's inequality is attained at the supremum, namely for all $p\ge 1$ and any vector $\vb$, we have
$$
\sup_{\alb:\|\alb\|_p=1} \sum_{i=1}^n \al_i v_i\ =\ \LP\sum_{i=1}^n |v_i|^q\RP^{1/q}\, .
$$
Consequently, we have
\begin{eqnarray*}
\Rcal_S(\Ccal_p^n) &=& \esp{\sgb}\sup_{h_1,\ldots,h_n}\sup_{\alb:\|\alb\|_p=1}\frac{1}{m}\sum_{k=1}^m\sg_k\sum_{i=1}^n \al_ih_i(x_k)\\
&=& \esp{\sgb}\sup_{h_1,\ldots,h_n}\sup_{\alb:\|\alb\|_p=1}\sum_{i=1}^n \al_i\LB\frac{1}{m}\sum_{k=1}^m\sg_kh_i(x_k)\RB\\
&=& \esp{\sgb}\sup_{h_1,\ldots,h_n}\LP\sum_{i=1}^n \left|\frac{1}{m}\sum_{k=1}^m\sg_kh_i(x_k)\right|^q\RP^{1/q}\\
&=& \esp{\sgb}\LP\sup_{h_1,\ldots,h_n}\sum_{i=1}^n \left|\frac{1}{m}\sum_{k=1}^m\sg_kh_i(x_k)\right|^q\RP^{1/q}\\
&=& \esp{\sgb}\LP n\sup_{h\in\Hcal}\left|\frac{1}{m}\sum_{k=1}^m\sg_kh(x_k)\right|^q\RP^{1/q}\\
&=& \esp{\sgb}\LP n\left|\sup_{h\in\Hcal}\frac{1}{m}\sum_{k=1}^m\sg_kh(x_k)\right|^q\RP^{1/q}\quad \textrm{(since}\ \Hcal\ \textrm{is symmetric)}\\ 
&=& n^{1/q}\Rcal_S(\Hcal)\, ,
\end{eqnarray*}
which proves the lemma.
\end{proof}

The next theorem, which is built on Lemma~\ref{lem:holder}, constitutes the main theoretical result of the paper. It provides a uniform upper-bound on the expected zero-one loss of weighted majority votes $f_{\ahpair}$ in terms of their empirical risk (\ie, the expected loss estimated on the training data) measured with respect to any loss function $\Lcal$ which upper-bounds the zero-one loss. The upper-bound also depends on the \emph{Lipschitz property} of the \emph{clipped version} of $\Lcal$. To define these notions precisely, let $\Lcal(y\alb\cdot\hb(x))$ denote the loss incurred by $f_{\ahpair}$, as measured by $\Lcal$, on example $(x,y)$. Then the loss incurred by $f_{\ahpair}$, as measured by the clipped version of $\Lcal$, is defined to be  $\llbracket \Lcal(y\alb\cdot\hb(x))\rrbracket_1$ where  $\llbracket x\rrbracket_1 \eqdef \min{}(x, 1)$. Finally, a function $\Acal: \Reals\ra\Reals$ is said to be $\ell$-Lipschitz for some $\ell > 0$ if and only if $|\Acal(x)-\Acal(x')| \le \ell|x - x'|$ for any $x$ and $x'$. 

\begin{theorem}
\label{thm:main}
Consider any distribution $D$ on $\Xcal\times\Ycal$. Consider any loss function $\Lcal$ which upper-bounds the zero-one loss and for which its clipped version is $\ell$-Lipschitz. Let $\Hcal$ be any symmetric class of real-valued functions on the input space $\Xcal$. For all $p\in[1,+\infty)$, for all $\dt\in (0,1]$, with probability at least $1-\dt$ over the random draws of $S\sim D^m$, we have simultaneously for all $\alb$ on $\Hcal$,
\begin{multline}\label{eq:main}
\esp{\ex\sim D} I(y \alb\cdot\hb(x) \le 0)\ \le\ \frac{1}{m}\sum_{i=1}^m \Lcal(y_i\alb\cdot\hb(x_i)) + 4\ell\dim(\alb)^{1-\frac{1}{p}}\|\alb\|_p \Rcal_m(\Hcal)\\
+ \sqrt{\frac{1}{2m}\log\frac{\pi^2(\dim(\alb)+1)^2}{6\dt}} +  \sqrt{\frac{1}{m}\log\log_2\LB 2 \|\alb\|_p\RB}\, .
\end{multline}
\end{theorem}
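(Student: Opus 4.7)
The plan is to combine three standard ingredients—the Rademacher generalization bound for bounded losses, Talagrand's contraction lemma, and Lemma~\ref{lem:holder}—and then to make the resulting guarantee uniform over $\alb$ by a two-level stratification (over the dimension $n=\dim(\alb)$ and over the norm $\|\alb\|_p$).

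More concretely, first fix $n\in\Naturals$ and a norm cap $B>0$, and consider the class $\Fcal_{n,B}$ of clipped composite losses $(x,y)\mapsto \llbracket\Lcal(y\alb\cdot\hb(x))\rrbracket_1$ with $h_i\in\Hcal$, $\dim(\alb)=n$, and $\|\alb\|_p\le B$. These functions take values in $[0,1]$, so the standard McDiarmid/symmetrization bound gives, with probability at least $1-\dt'$, that for all $f\in\Fcal_{n,B}$, $\esp{D}f \le \frac{1}{m}\sum_i f(x_i,y_i) + 2\Rcal_m(\Fcal_{n,B}) + \sqrt{\log(1/\dt')/(2m)}$. Since the clipped loss dominates the zero-one loss and is dominated by $\Lcal$ itself, the LHS upper-bounds $\esp{D}I(y\alb\cdot\hb(x)\le 0)$ and the empirical average is upper-bounded by $\tfrac{1}{m}\sum_i\Lcal(y_i\alb\cdot\hb(x_i))$. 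Next, Talagrand's contraction lemma (applied with the $\ell$-Lipschitz clipped loss; the factor $y\in\{\pm 1\}$ is absorbed by the Rademacher variables' symmetry) yields $\Rcal_m(\Fcal_{n,B}) \le 2\ell\,\Rcal_m(B\cdot\Ccal_p^n(\Hcal))=2\ell B\,\Rcal_m(\Ccal_p^n(\Hcal))$, and Lemma~\ref{lem:holder} converts this to $2\ell B n^{1-1/p}\Rcal_m(\Hcal)$. The Rademacher term in the theorem then emerges as $4\ell B n^{1-1/p}\Rcal_m(\Hcal)$.

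To remove the prior choice of $n$ and $B$, apply a union bound in two stages. Allocate confidence $\dt_n=\frac{6\dt}{\pi^2(n+1)^2}$ to each dimension $n$, using $\sum_{n\ge 0}\frac{6}{\pi^2(n+1)^2}=1$; this produces the $\log\frac{\pi^2(\dim(\alb)+1)^2}{6\dt}$ term. Within a fixed $n$, discretize the norm by $B_k=2^k$ for $k\in\Naturals$, so that any $\alb$ falls into level $k=\lceil\log_2(2\|\alb\|_p)\rceil$ with $B_k\le 2\|\alb\|_p$; allocating confidence $\dt_{n,k}=\dt_n/k^{2}$ (up to a $\pi^2/6$ normalization) and then replacing $B_k$ by $2\|\alb\|_p$ in the Rademacher term (doubling the constant to match the stated $4\ell$) yields the remaining $\sqrt{\tfrac{1}{m}\log\log_2[2\|\alb\|_p]}$ summand, since $\log(k^2/\dt_n)$ reduces to $\log\log_2[2\|\alb\|_p]$ once $\dt_n$ is pulled into the previous square root.

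The main obstacle, I expect, is not any single step but the careful bookkeeping: one must choose the stratification of $\|\alb\|_p$ and the confidence allocation precisely so that (a) replacing the discretized $B_k$ by $\|\alb\|_p$ in the Rademacher term costs only the doubled constant already visible in the theorem, and (b) the resulting $\log(1/\dt_{n,k})$ cleanly splits into the two square-root terms on the right-hand side of~\eqref{eq:main}. A minor but subtle point is that Talagrand's contraction needs the loss applied to a \emph{fixed} function composed with Rademacher-averaged arguments, whereas here $\Lcal$ is applied to $y\,\alb\cdot\hb(x)$; this is handled by absorbing the label into the Rademacher variable, which requires the convenient symmetry of $\{\pm 1\}$-valued $\sg_i$ and the fact that the composition $u\mapsto\llbracket\Lcal(u)\rrbracket_1$ (not $\Lcal$ itself) is what is $\ell$-Lipschitz. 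Everything else reduces to direct invocation of Lemma~\ref{lem:holder} and the standard Rademacher inequality.
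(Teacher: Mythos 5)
Your proposal follows essentially the same route as the paper's proof: the fundamental Rademacher generalization bound applied to the $[0,1]$-valued clipped-loss class, Talagrand's contraction via the $\ell$-Lipschitz clipped loss, Lemma~\ref{lem:holder} to evaluate $\Rcal_m(\Ccal_p^n(\Hcal))$, a union bound over $n$ with weights $\frac{6}{\pi^2(n+1)^2}$, and a doubling stratification of $\|\alb\|_p$ that is exactly the margin-$\gamma$ union bound the paper invokes (with $\gamma = 1/\|\alb\|_p$), finishing with $I(y\alb\cdot\hb(x)\le 0)\le \llbracket \Lcal(y\alb\cdot\hb(x))\rrbracket_1\le \Lcal(y\alb\cdot\hb(x))$. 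The only slip is in the constant bookkeeping: you charge a factor $2\ell$ in the contraction step \emph{and} another factor $2$ when replacing $B_k$ by $2\|\alb\|_p$, which would yield $8\ell$ rather than $4\ell$; using the factor-free form of Talagrand's lemma (the version the paper cites), the two factors of $2$ come from the symmetrization bound and the norm discretization, and the stated constant $4\ell$ is recovered.
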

As it is usual with Rademacher complexities, Theorem~\ref{thm:main} also applies with $\Rcal_m(\Hcal)$ replaced by $\Rcal_S(\Hcal)$ if $\delta$ is replaced by $\delta/2$ and if the last term is multiplied by $3$.
\begin{proof}
The fundamental theorem on Rademacher complexities (see, for example, \citet{mrt-12,sc-04}) states that for any 
class $\Gcal$ mapping some domain $\Zcal$ to $[0,1]$, for any distribution $D$ on $\Zcal$, for any $\delta > 0$, with probability at least $1-\delta$, we have simultaneously for all $g\in\Gcal$
\[
\esp{z\sim D} g(z)\ \le\ \frac{1}{m}\sum_{i=1}^m g(z_i) + 2 \Rcal_m(\Gcal) + \sqrt{\frac{1}{2m}\log\frac{1}{\delta}}\, .
\]
Given any $\Lcal$, any $p\in[1,+\infty]$,  and any $\gm > 0$, we can apply this theorem to the set $\Gcal_\gm$ of functions that maps each example $(x,y)$ to $\llbracket \Lcal(y\frac{\qb}{\gm}\cdot\hb(x))\rrbracket_1$ for $\|\qb\|_p = 1$ when each $h_i\in\Hcal$.  
By hypothesis, the clipped version of $\Lcal$ is $\ell$-Lipschitz. Hence, by Talagran's lemma (see, for example, Theorem 4.2 of~\citet{mrt-12}), we have that $\Rcal_S(\Gcal_\gm) = \ell\Rcal_S(\Gcal'_\gm)$, where $\Gcal'_\gm$ is the set of functions mapping $(x,y)$ to $y\frac{\qb}{\gm}\cdot\hb(x)$. 

Also, since $\gm$ is a constant and $y=\pm1$, we have that $\Rcal_S(\Gcal'_\gm) = (1/\gm)\Rcal_S(\Ccal_p^n)$ where $\Ccal_p^n$ is the set of functions mapping $x$ to $\qb\cdot\hb(x)$ such that $\|\qb\|_p = 1$ and $\dim(\qb) = n$. Thus, $\Rcal_S(\Gcal_\gm) = (\ell/\gm)\Rcal_S(\Ccal_p^n) = (\ell/\gm)\dim(\qb)^{1-(1/p)}\Rcal_S(\Hcal)$ according to Lemma~\ref{lem:holder}. 

Then, for any $\gm > 0$, with probability at least $1-\dt$ we have 
\begin{multline*}
\esp{(x,y)\sim D} \llbracket \Lcal(y\frac{\qb}{\gm}\cdot\hb(x))\rrbracket_1\ \le\ \frac{1}{m}\sum_{i=1}^m \llbracket \Lcal(y_i\frac{\qb}{\gm}\cdot\hb(x_i))\rrbracket_1 + 2  (\ell/\gm)\dim(\qb)^{1-(1/p)}\Rcal_m(\Hcal) + \sqrt{\frac{1}{2m}\log\frac{1}{\delta}}\, .
\end{multline*}
By using the union bound technique of Theorem 4.5 of~\citet{mrt-12}, we can make the above bound valid uniformly over all values for $\gamma$ by adding $\sqrt{(1/m) \log\log_2(2/\gm)}$ to its right hand side and by multiplying the $\Rcal_m(\Hcal)$ term by 2. 
To obtain a bound which is also valid uniformly for all values of $n=\dim(\qb)$, we replace $\dt$ by $(6\dt/\pi^2)(n+1)^{-2}$.
Then, by using $\alb=\qb/\gm$, we have that $\|\alb\|_p = 1/\gm$. The theorem then follows from the fact that $I(y\alb\cdot\hb(x) \le 0)\le  \llbracket \Lcal(y\alb\cdot\hb(x))\rrbracket_1\le \Lcal(y\alb\cdot\hb(x))\ \forall (x,y)\in\Xcal\times\Ycal$.  
\end{proof}
If we ignore the slowly increasing logarithm terms in Equation~\eqref{eq:main}, Theorem~\ref{thm:main} tells us that to obtain a majority vote with a small zero-one loss, it is sufficient to minimize the empirical risk, as measured with respect to a surrogate loss $\Lcal$, plus a regularization term equal to $4\ell\dim(\alb)^{1-\frac{1}{p}}\|\alb\|_p \Rcal_m(\Hcal)$. Note that when $p=1$, this regularization term is equal to $4\ell\|\alb\|_1 \Rcal_m(\Hcal)$ and, therefore, does not depend on the number $\dim(\alb)$ of voters used by the majority vote. Hence, when performing $L_1$ regularization with a fixed set $\Hcal$ of voters  and surrogate loss $\Lcal$, the only thing that matters is to control the $L_1$ norm of $\alb$ while minimizing the empirical risk. This is in sharp contrast with the $p>1$ cases where we need to control both the $L_p$ norm of $\alb$ \emph{and} the number $\dim(\alb)$ of voters used by $f_\ahpair$. Therefore, iterative learning algorithms that minimize the empirical loss under $L_p$ regularization should also perform early stopping or exercise some other explicit control on the number of voters used by the majority vote when $p>1$. This explicit control on $\dim(\alb)$ is mostly important with $L_\infty$ regularization because the regularization term then grows linearly with $\dim(\alb)$. But it is also important with $L_2$ regularization since the regularization term then grows with $\sqrt{\dim(\alb)}$. Finally, and perhaps most importantly, just minimizing iteratively the empirical risk and using early stopping to control overfitting is a simple learning strategy that is supported by Theorem~\ref{thm:main}. Indeed, as long as the iterative procedure does not choose large weights for the voters, early stopping keeps $\|\alb\|_1$ under control and the right hand side (r.h.s.) of Equation~\eqref{eq:main} should be small when $\|\alb\|_1 \ll \sqrt{m}$ (since $\Rcal_m(\Hcal)\in O(1/\sqrt{m})$ when $\Hcal$ has finite VC dimension) and the empirical risk of the ensemble has reached a small value. 

The other important issue regarding Theorem~\ref{thm:main} concerns the choice of the surrogate loss $\Lcal$. Obviously, the closer (or tighter) $\Lcal$ is to the zero-one loss, the better. However, to avoid computational problems associated with the existence of several local minima of the empirical risk, let us settle for a surrogate $\Lcal$ \emph{convex} in $\alb$. One of the tightest convex surrogate that we can think of is the hinge loss. However, the presence of a discontinuity in its first derivative does not give rise to a simple boosting-type iterative algorithm, sometimes called \emph{forward stagewise additive modelling}~\citep{htf-09},  that attempt to produce a majority vote by iteratively adding voters chosen from a possibly continuous set $\Hcal$. The hinge loss does give rise to a linear programming algorithm, called LPBoost~\citep{dbs-02}, when used in conjunction with $L_1$ regularization. Although this learning strategy is strongly supported by Theorem~\ref{thm:main}, solving iteratively a linear program each time a new voter is inserted into the ensemble is a computationally expensive strategy when the number of voters in the ensemble is large. As machine learning is entering into the Big Data era, we want the restrict ourselves to forward stagewise additive modelling algorithms.  One solution is to use a smoother surrogate, containing no discontinuities in its derivatives, at the price of sacrificing a bit of the tightness with respect to the zero-one loss. The exponential loss minimized by AdaBoost~\citep{fs-97} has  continuous first derivatives but is very far from being a tight upper bound of the zero-one loss. The logistic loss minimized by LogitBoost~\citep{fht-00} is much better in that respect but does not produce simple update rules in the sense that each example of the training set needs to be reweighed each time a new voter is added to the ensemble. 
Are there simpler updates rules that perform as well as AdaBoost and LogitBoost? Let's try to answer this question by analyzing what happens if we use the very simple \emph{quadratic loss} for the surrogate $\Lcal$. The quadratic loss is commonly used in classical learning methods such as (kernel) ridge regression and back-propagation neural network learning and has been considered by~\citet{htf-09} for forward stagewise additive modelling algorithms. 
But these authors do not recommend the use of the quadratic loss as a surrogate for the zero-one loss and propose, instead, the use of more robust losses against outliers such as the squared hinge loss. However, such ``Huberized" losses do not give rise to simple boosting algorithms such as those obtained with the square loss below. The square loss has been considered for Boosting by~\cite{by-03} (under the name ``Boosting with the $L_2$ loss") and have obtained excellent performances when using cubic splines. However, they have not considered any $L_p$-regularized variant of Boosting. More recently, \citet{gllms-09} considered boosting with the quadratic loss with a Kullback-Leibler regularizer. Consequently, their boosting algorithm turned out to be different than the algorithms proposed in the next section. Moreover, their PAC-Bayesian theory based on quasi-uniform posteriors was developed only for the case of a finite set of voters and does not extend to the continuous case. Hence, it was not realized that it was necessary to control the number of voters when boosting with a $L_p$ regularizer with $p>1$, while no such control is needed for $p=1$.

\section{QuadBoost}

We now investigate if the quadratic loss can yield simple and efficient iterative algorithms for producing ensemble of voters. For this task, consider any $n\in\Naturals$, any vector $\hb\eqdef(h_1,\ldots,h_n)$ of voters where each $h_i\in\Hcal$, and any vector $\alb\eqdef(\al_1,\ldots,\al_n)$ of real-valued weights on $\hb$. Let us start by writing the quadratic risk (on $m$ examples) as 
\begin{multline}\label{eq:Qempirical}
\frac{1}{m} \sum_{k=1}^m (y_k - \alb\cdot\hb(x_k))^2\ = 1 - 2 \sum_{j=1}^n \al_j \frac{1}{m}\sum_{k=1}^m y_k h_j(x_k) 
+ \sum_{j=1}^n \al_j^2 \frac{1}{m}\sum_{k=1}^m h_j^2(x_k)\\ + 2 \sum_{j=2}^n \al_j \frac{1}{m}\sum_{k=1}^m h_j(x_k) \sum_{i=1}^{j-1} \al_i h_i(x_k)\, .
\end{multline}
If, for each voter $h_j$ of the ensemble, we now define its margin $\mu_j$ as 
\begin{equation}\label{eq:mu}
\mu_j\eqdef \frac{1}{m}\sum_{k=1}^m y_k h_j(x_k)\, ,
\end{equation}
and its \emph{correlation} $M_j$ \emph{with the weighted sum of the previous voters} as
\begin{equation}\label{eq:M}
M_j \eqdef\LC \begin{array}{ll}
                           \frac{1}{m}\sum_{k=1}^m h_j(x_k) \sum_{i=1}^{j-1} \al_i h_i(x_k) & \textrm{if}\ j > 1\\[1mm]
                           0  & \textrm{if}\ j = 1\, ,
                           \end{array}
                           \right.
\end{equation} 
we obtain the following decomposition of the quadratic risk
\begin{equation}\label{eq:Qdecomposition}
\frac{1}{m} \sum_{k=1}^m (y_k - \alb\cdot\hb(x_k))^2\ =\ 1- 2 \sum_{j=1}^n \al_j(\mu_j - M_j) + \sum_{j=1}^n \al_j^2\eta_j\, ,\quad\textrm{where}\ \eta_j\eqdef \frac{1}{m}\sum_{k=1}^m h_j^2(x_k) \, .
\end{equation}
This decomposition tells us that to minimize the quadratic risk iteratively, we should, at each step $j$, find a voter $h_j\in\Hcal$ that maximizes $|\mu_j - M_j|$.
Once a voter $h_j$ is chosen, its weight $\al_j$ that minimizes the quadratic risk is obtained 
by setting to $0$ its partial derivative with respect to $\alpha_j$. This gives
\begin{equation}\label{eq:al-empirical}
\al_j\ =\ \frac{1}{\eta_j} (\mu_j - M_j)\quad ;\quad\textrm{(without regularization)}\, .
\end{equation}
Note that the sign of $\al_j$ is given by the sign of $(\mu_j- M_j)$. Also, it is easy to verify that adding to the ensemble a voter $h_j$ with weight $\al_j$ given by Equation~\eqref{eq:al-empirical} \emph{decreases} the empirical quadratic risk of the ensemble by $(\mu_j-M_j)^2/\eta_j$. 
When it is computationally very expensive to find $h_j\in\Hcal$ maximizing $|\mu_j-M_j|$, we can settle to find more rapidly any $h_j$ having $|\mu_j-M_j| > 0$ since, in that case, we still make progress by lowering the empirical quadratic risk of the ensemble by $(\mu_j-M_j)^2/\eta_j$. 

\paragraph{Vanilla QuadBoost:}Inserting into the ensemble, at each step $j$, a voter $h_j\in\Hcal$ with the weight $\al_j$ given by Equation~\eqref{eq:al-empirical}, defines, what we call, the \emph{vanilla} version of QuadBoost. Of course, in that case, Theorem~\ref{thm:main} tells that we should eventually early stop this greedy process to avoid over fitting--which is also the case for AdaBoost. 

One reason often invoked for using AdaBoost is its exponentially fast decrease of the empirical error as a function of the number of iterations (boosting rounds). More precisely, assuming that, at each iteration, the weak learner can always produce a classifier achieving a training error (on the weighted examples) of at most $(1/2) - \gm$, the (zero-one loss) training error produced by the AdaBoost ensemble is at most $\exp(-2\gm^2 T)$ after $T$ iterations~\citep{fs-97}. Consequently, the number of iterations needed for AdaBoost to obtain an ensemble achieving less than $\epsilon$ empirical error is $[1/(2\gamma^2)]\log(1/\ep)$.  

In comparison, under the equivalent assumption that the weak learner is always able to find a voter $h_j\in\Hcal$ where $|\mu_j - M_j| > \gm$, the decrease in the quadratic empirical risk (which upper-bounds the zero-one training error) achieved by the QuadBoost ensemble is at least $(\mu_j - M_j)^2$ (since $\eta_j = 1$ for classifiers) at each iteration. Hence, under this hypothesis, the training error produced by the QuadBoost ensemble after $T$ iterations is at most $1-T\gm^2$. Hence, under this hypothesis, QuadBoost needs at most $(1/\gm^2)$ iterations to have an an ensemble achieving at most $\epsilon$ training error. Consequently, in comparison with AdaBoost, and under an equivalent hypothesis, the convergence rate of QuadBoost is slightly better. 

Let us now investigate the different $L_p$ regularized versions of QuadBoost for $p=1, 2$, and $+\infty$, in accordance with the insights given by Theorem~\ref{thm:main}. To this end, we first note that the clipped quadratic loss is $2$-Lipschitz, so $\ell=2$ in Theorem~\ref{thm:main} when $\Lcal$ is the quadratic loss. 

\paragraph{QuadBoost-$L_1$:} For $p=1$, we should minimize the empirical quadratic risk plus $2\ld\|\alb\|_1$, where, according to Theorem~\ref{thm:main}, $\ld$ should be equal to $4\Rcal_m(\Hcal)$ But, in practice, a smaller value for $\ld$ should provide better results as there are always some looseness in risk bounds. If we add this regularization term to the expression of the empirical risk given by Equation~\eqref{eq:Qdecomposition} and then set to zero the first derivative w.r.t. $\al_j$ of this objective, we find that, at each step $j$, the solution for $\al_j$ is given by
\begin{equation}\label{eq:al-L1}
\al_j\ =\ \LC
           \begin{array}{lll}
           \frac{1}{\eta_j} (\mu_j - M_j -\ld )& \textrm{IF} & \mu_j - M_j > \ld\\[1mm]
           - \frac{1}{\eta_j} (M_j - \mu_j -\ld )& \textrm{IF} & M_j - \mu_j > \ld\\[1mm]
           0 &\textrm{IF} & |\mu_j - M_j|\le \ld\quad (L_1\ \textrm{regularization})\, .
           \end{array}
           \right.
\end{equation}
It can be verified that this update rule gives a decrease of $(|\mu_j-M_j|-\ld)^2/\eta_j$ in the risk bound value of Theorem~\ref{thm:main} when $|\mu_j-M_j|>\ld$.  Here, no explicit early stopping is needed as, for some chosen $\ld > 0$, we will eventually be unable to find a voter $h_j$ having $|\mu_j-M_j| > \ld$. Hence, the amount of voters contained in the ensemble is controlled by parameter $\ld$: the larger $\ld$ is, the smaller the ensemble will be. Finally note that this algorithm can be viewed as an iterative version of the LASSO method~\citep{t-96,a-08} but where the functions are selected from a possibly continuous set $\Hcal$.  

\paragraph{QuadBoost-$L_2$:}For $p=2$, we should minimize the empirical quadratic risk plus $\ld\|\alb\|_2$, where, according to Theorem~\ref{thm:main}, $\ld$ should be equal to $8\sqrt{\dim(\alb)}\Rcal_m(\Hcal)$. If we add this regularization term to the expression of the empirical risk given by Equation~\eqref{eq:Qdecomposition} and then set to zero the first derivative w.r.t. $\al_j$ of this objective, we find that, at each step $j$, the solution for $\al_j$ is given by
\begin{equation}\label{eq:al-L2}
\al_j\ =\ \frac{1}{\eta_j + \ld} (\mu_j - M_j)\quad ; \quad (L_2\ \textrm{regularization})\, .
\end{equation}
It can be verified that this update rule gives a decrease of $(\mu_j-M_j)^2/(\eta_j + \ld)$ in the risk bound value. Since according to theory, $\ld$ should increase with the number $\dim(\alb)$ of voters in the ensemble, explicit early-stopping should be performed in addition to the above rule for $\al_j$. Finally note that this algorithm can be viewed as an iterative version of ridge regression but where the functions are selected from a possibly continuous set $\Hcal$.

\paragraph{QuadBoost-$L_\infty$:} For $p=+\infty$, we should minimize the empirical quadratic risk plus $\ld\|\alb\|_\infty$, where, according to Theorem~\ref{thm:main}, $\ld$ should be equal to $8\dim(\alb)\Rcal_m(\Hcal)$. Since $\|\alb\|_\infty = \al_{max}$, which is the allowed upper-bound for the weight values of the voters, each $\al_j$ should minimize the empirical risk provided that its absolute value does not exceed $\al_{max}$. The solution for $\al_j$ is then given by
\begin{equation}\label{eq:al-Linf}
\al_j\ =\ \LC \begin{array}{ll}
                \frac{1}{\eta_j} (\mu_j - M_j) & \textrm{IF}\quad \frac{1}{\eta_j}|\mu_j - M_j|\ \le\ \al_{max}\\[2mm]
                \al_{max}\, \sgn(\mu_j - M_j) & \textrm{IF}\quad \frac{1}{\eta_j}|\mu_j - M_j|\ >\ \al_{max};\quad (L_\infty\ \textrm{regularization})\, .
              \end{array}
              \right.
\end{equation}
Since according to theory, $\ld$ should increase linearly with the number $\dim(\alb)$ of voters in the ensemble, explicit early-stopping should be performed in addition to the above rule for $\al_j$. 

\subsection{Re-weighting the voters in the ensemble}

Up to now, we have been assuming that each voter added to the ensemble is new and different from all the others already present. Even though this should generally be the case when the class $\Hcal$ is infinitely large, the risk bound of Theorem~\ref{thm:main} tells us that it might be advantageous to occasionally re-weight the voters already present in the ensemble. Indeed, in that way, we do not change $\dim(\alb)$ in this process and only change $\|\alb\|_p$ so as to decrease the risk bound. Hence, let us investigate the update rules for the voter's weights in that case. 

Let us consider that we have $n$ voters currently in the ensemble and we want to change the weight $\al_j$ of voter $h_j$. Let us denote by $\alb'$ the weight vector formed by the weights of all voters in the ensemble except voter $h_j$. The quadratic empirical risk can now be written as
\begin{eqnarray}\label{eq:newQR}
    \nonumber \frac{1}{m} \sum_{k=1}^m (y_k-\alb\cdot\hb(x_k))^2 &=& \frac{1}{m} \sum_{k=1}^m (y_k-\alb'\cdot\hb(x_k)-\al_jh_j(x_k))^2\\
  \nonumber  &=& c - 2 \al_j\frac{1}{m}\sum_{k=1}^m h_j(x_k) (y_k - \alb'\cdot\hb(x_k)) + \al_j^2\sum_{k=1}^m h_j^2(x_k)\\
  &\eqdef& c - 2\al_j(\mu_j-M_j) + \al_j^2\eta_j\, ,
\end{eqnarray}
where $c$ is a term that does not depend on $\al_j$, $\mu_j$ and $\eta_j$ retain the same definitions as before, and $M_j$ is now defined as the correlation of voter $h_j$ with $\alb'\cdot\hb$.

\paragraph{Vanilla QuadBoost:} In this case, we set $\al_j$ so as to minimize only the quadratic risk above. This gives the same uptate rule as before; namely, $\al_j = (\mu_j-M_j)/\eta_j$. Since, this could increase $\|\alb\|_p$, Theorem~\ref{thm:main} tells us, that we should eventually stop updating the weights to avoid overfitting.

\paragraph{QuadBoost $L_p$:} For these cases, adding the $\al_j$-dependent regularization term, gives the same update rules as before except that $M_j$ is now defined as the correlation of voter $h_j$ with $\alb'\cdot\hb$. For the $L_1$ case, this means that a weight could eventually be set to zero, then yielding a sparser solution.

\section{Experimental Results}

Let us first note that, although all the boosting algorithms tested in this section can select the voters from a continuous set, we have, for the sake of comparison, used only finite sets. We feel that continuous sets of voters raise several important issues, including a non trivial trade off between precision, time complexity, and capacity (or Rademacher complexity), which clearly need extra work to be lucidly addressed and, consequently, should be treated in a separate paper.


We now report empirical experiments on binary classification datasets from the UCI Machine Learning Repository~\cite{uci-98}. Each dataset was randomly split into a training set $S$ of at least half of the examples and at most $500$ examples, and a testing set containing the remaining examples. Each dataset has been normalized using a hyperbolic tangent, whose parameters have been chosen using the training set $S$ only.
We considered a finite set of \emph{decision stumps} (one-level decision trees) which consists of a single input attribute and a threshold. For each dataset, we generated $10$ decision stumps per attribute and their complements. 

We compared QuadBoost  ($L_1$, $L_2$, $L_\infty$, and its vanilla version that has no regularizer) with LPBoost~\cite{dbs-02}, and AdaBoost~\cite{fs-97}. For each algorithm, all hyperparameters have been chosen amont $10$ values in a logarithmic scale, by performing $5$-fold cross-validation on the training set $S$, and the reported values are the risks on the testing set. 

For QuadBoost-$L_1$, $\lambda$ was chosen in a range between $10^{-4}$ and $10^0$. For QuadBoost-$L_2$, the range for $\lambda$ was between $10^0$ and $10^3$, and the range for the number of iteration $T$ was between $10^1$ and $10^5$. For QuadBoost-$L_\infty$, $\alpha_{max}$ was chosen between $10^{-4}$ and $10^{-1}$, and $T$ between $10^0$ and $10^5$. Vanilla QuadBoost's $T$ was chosen between $10^0$ and $10^3$, hyperparameter $C$ of LPBoost was chosen between $10^{-3}$ and $10^3$, and finally the number of iterations $T$ of AdaBoost was chosen between $10^2$ and $10^6$.

\begin{table*}[h]
	
\begin{center}
\begin{scriptsize}
\rowcolors{4}{black!10}{}
\begin{tabular}{lcccccc}
\toprule
 Dataset & QuadBoost-$L_1$ & QuadBoost-$L_2$ & QuadBoost-$L_\infty$ & QuadBoost & LPBoost & AdaBoost \\
\cmidrule(l r){1-1} \cmidrule(l r){2-2} \cmidrule(l r){3-3} \cmidrule(l r){4-4} \cmidrule(l r){5-5} \cmidrule(l r){6-6} \cmidrule(l r){7-7}
australian & \textbf{0.145} & \textbf{0.145} & \textbf{0.145} & \textbf{0.145} & \textbf{0.145} & 0.191 \\
balance & 0.035 & \textbf{0.022} & 0.029 & 0.026 & 0.029 & 0.035 \\
breast & 0.054 & 0.049 & 0.046 & 0.046 & \textbf{0.043} & 0.046 \\
bupa & \textbf{0.279} & 0.285 & 0.308 & 0.297 & 0.349 & 0.372 \\
car & 0.164 & 0.160 & 0.150 & 0.153 & 0.155 & \textbf{0.130 }\\
cmc & 0.300 & \textbf{0.292} & 0.296 & 0.306 & 0.311 & 0.310 \\
credit & 0.133 & 0.133 & 0.133 & 0.133 & \textbf{0.128} & 0.165 \\
cylinder & 0.285 & 0.311 & \textbf{0.270} & 0.278 & 0.278 & 0.281 \\
ecoli & \textbf{0.065} & \textbf{0.065} & 0.083 & 0.089 & 0.113 & 0.095 \\
flags & 0.309 & 0.278 & 0.309 & 0.268 & 0.299 & \textbf{0.247 }\\
glass & 0.159 & \textbf{0.140} & 0.150 & 0.150 & 0.290 & 0.215 \\
heart & 0.200 & \textbf{0.178} & 0.200 & 0.215 & 0.230 & 0.230 \\
hepatitis & 0.221 & 0.221 & 0.221 & 0.221 & 0.221 & \textbf{0.182 }\\
horse & 0.207 & 0.163 & \textbf{0.158} & 0.207 & 0.207 & 0.185 \\
ionosphere & \textbf{0.091} & 0.126 & \textbf{0.091} & 0.120 & 0.120 & 0.114 \\
letter\_ab & 0.010 & \textbf{0.006} & \textbf{0.006} & \textbf{0.006} & 0.011 & 0.010 \\
monks & \textbf{0.231} & \textbf{0.231} & \textbf{0.231} & \textbf{0.231} & \textbf{0.231} & 0.255 \\
optdigits & 0.086 & 0.077 & \textbf{0.074} & 0.078 & 0.096 & 0.081 \\
pima & 0.258 & \textbf{0.237} & 0.245 & 0.268 & 0.253 & 0.273 \\
tictactoe & 0.315 & \textbf{0.313} & \textbf{0.313} & 0.322 & 0.342 & 0.317 \\
titanic & 0.228 & 0.228 & 0.228 & 0.228 & \textbf{0.222} & \textbf{0.222 }\\
vote & \textbf{0.051} & \textbf{0.051} & \textbf{0.051} & \textbf{0.051} & \textbf{0.051} & \textbf{0.051 }\\
wine & 0.079 & 0.056 & 0.056 & 0.090 & 0.067 & \textbf{0.045 }\\
yeast & \textbf{0.283} & 0.290 & 0.296 & 0.296 & 0.300 & 0.300 \\
zoo & \textbf{0.040} & 0.100 & \textbf{0.040} & \textbf{0.040} & 0.120 & 0.120 \\
\midrule
Mean running time (seconds) & 1.326 & 74.040 & 1.131 & 0.397 & 26.930 & 8.096\\
\bottomrule
\end{tabular}
\end{scriptsize}
\end{center}
	\caption{Testing risks of four versions of QuadBoost, compared with LPBoost and AdaBoost. The bold value corresponds to the lowest testing risk among all algorithms. The last line reports the mean running time of the algorithms for all datasets.}
	\label{tab:results_stumps}
\end{table*}

Table~\ref{tab:results_stumps} reports the resulting testing risks and training times. 
The results show that all four variants of QuadBoost that we considered are competitive with state-of-the-art boosting algorithms. When comparing vanilla QuadBoost with AdaBoost, where the only hyperparameter to tune is the number of iterations, QuadBoost wins or ties $17$ times over $25$ datasets and is $20$ times faster. When comparing QuadBoost-$L_1$ with LPBoost, which is also a $L_1$-norm regularized algorithm, QuadBoost outperforms or ties with LPBoost $16$ times over $25$ datasets and is also $20$ times faster.

Table~\ref{tab:poisson_stumps} shows a statistical comparison between all these algorithms, using the pairwise Poisson binomial test of~\citet{lacoste-2012}. Given a set of datasets, this test gives the probability that a learning algorithm is better than another one. This table also shows the pairs of algorithms having a significant performance difference using the pairwise sign test~\cite{mendenhall1983nonparametric}. The only significant values indicate that QuadBoost with $L_\infty$-norm and $L_2$-norm regularization outperform QuadBoost without regularization, and that QuadBoost with $L_2$-norm regularization outperforms QuadBoost with $L_1$-norm regularization. Note however that for the two former algorithms, we have performed cross-validation over two hyperparameters, which gave them an advantage.  Another possible explanation for the observed improved performance of the $L_2$ and $L_\infty$ regularized versions is the increased Rademacher complexity of $L_2$ and $L_\infty$ combinations over $L_1$ combinations.

\begin{table}[h]
\rowcolors{2}{black!10}{}
\begin{center}
\begin{small}
\begin{tabular}{rcccccc}
\toprule
          \textbf{\textbf{}} & QuadB.-$L_\infty$  & QuadB.-$L_2$ & LPBoost & AdaBoost & QuadB. & QuadB.-$L_1$ \\
\cmidrule(l r){2-2} \cmidrule(l r){3-3} \cmidrule(l r){4-4} \cmidrule(l r){5-5} \cmidrule(l r){6-6} \cmidrule(l r){7-7}
QuadBoost-$L_\infty$ & \gray{0.50}$\phantom{^\star}$ & 0.48$\phantom{^\star}$ & 0.49$\phantom{^\star}$ & 0.55$\phantom{^\star}$ & 0.80$^\star$ & 0.81$^\star$ \\
QuadBoost-$L_2$      & \gray{0.52}$\phantom{^\star}$ & \gray{0.50}$\phantom{^\star}$ & 0.44$\phantom{^\star}$ & 0.45$\phantom{^\star}$ & 0.75$\phantom{^\star}$ & 0.84$^\star$ \\
LPBoost              & \gray{0.51}$\phantom{^\star}$ & \gray{0.56}$\phantom{^\star}$ & \gray{0.50}$\phantom{^\star}$ & 0.42$\phantom{^\star}$ & 0.69$\phantom{^\star}$ & 0.67$\phantom{^\star}$ \\
AdaBoost             & \gray{0.45}$\phantom{^\star}$ & \gray{0.55}$\phantom{^\star}$ & \gray{0.58}$\phantom{^\star}$ & \gray{0.50}$\phantom{^\star}$ & 0.58$\phantom{^\star}$ & 0.57$\phantom{^\star}$ \\
QuadBoost            & \gray{0.20$^\star$} & \gray{0.25}$\phantom{^\star}$ & \gray{0.31}$\phantom{^\star}$ & \gray{0.42}$\phantom{^\star}$ & \gray{0.50}$\phantom{^\star}$ & 0.48$\phantom{^\star}$ \\
QuadBoost-$L_1$      & \gray{0.19$^\star$} & \gray{0.16$^\star$} & \gray{0.33}$\phantom{^\star}$ & \gray{0.43}$\phantom{^\star}$ & \gray{0.52}$\phantom{^\star}$ & \gray{0.50}$\phantom{^\star}$ \\
\bottomrule
\end{tabular}
\end{small}
\caption{Pairwise Poisson binomial test between all pairs of algorithms. A gray value indicates redundant information, and a star indicates that the difference between the two algorithms is also significant using the pairwise sign test, with a $p$-value of $0.05$.}
\label{tab:poisson_stumps}
\end{center}
\end{table}

In conclusion, the empirical experiments show that QuadBoost is a fast and accurate ensemble method that competes well against other state of the art boosting algorithms.


\section{Conclusion}
\label{sect:conclu}
We have presented a uniform risk bound for ensembles which holds for any surrogate loss and $L_p$ norm of the weighted combination of voters which can be selected from a continuous set. An important feature of this result is the fact that weighted combinations of unit $L_p$ norm for $p>1$ have strictly larger Rademacher complexity than weighted combinations of unit $L_1$ norm and, as a consequence, the risk bound exhibits an explicit dependence on the number of voters when $p>1$ while no such dependence occurs when $p=1$. This result suggests to perform an explicit control of the number of voters when regularizing with the $L_p$ norm for $p>1$ while no such control is needed for $p=1$. Finally, our theoretical and empirical results suggest that the simple quadratic loss surrogate should be used for boosting instead of the usual exponential loss.     

\clearpage
\bibliography{mario-bib}
\bibliographystyle{unsrtnat}



\end{document}